\newtheorem{proposition}{Proposition}
\icmltitlerunning{Learning Gradient Fields for Molecular Conformation Generation}
\newcommand{\method}{ConfGF\xspace}
\newcommand{\methodd}{ConfGFDist\xspace}
\DeclareRobustCommand\onedot{\futurelet\@let@token\@onedot}
\def\@onedot{\ifx\@let@token.\else.\null\fi\xspace}
\def\ie{\emph{i.e}\onedot}
\begin{document}

\twocolumn[
\icmltitle{Learning Gradient Fields for Molecular Conformation Generation}



\icmlsetsymbol{equal}{*}

\begin{icmlauthorlist}
\icmlauthor{Chence Shi}{equal,mila,udem}
\icmlauthor{Shitong Luo}{equal,pku}
\icmlauthor{Minkai Xu}{mila,udem}
\icmlauthor{Jian Tang}{mila,cifar,hec}

\end{icmlauthorlist}

\icmlaffiliation{mila}{Mila - Quebec AI Institute, Montr\'eal, Canada}
\icmlaffiliation{udem}{University of Montr\'eal, Montr\'eal, Canada}
\icmlaffiliation{hec}{HEC Montr\'eal, Montr\'eal, Canada}
\icmlaffiliation{cifar}{CIFAR AI Research Chair}

\icmlaffiliation{pku}{Peking University}

\icmlcorrespondingauthor{Jian Tang}{jian.tang@hec.ca}

\icmlkeywords{Machine Learning, ICML}

\vskip 0.3in
]



\printAffiliationsAndNotice{\icmlEqualContribution} 

\begin{abstract}

We study a fundamental problem in computational chemistry known as molecular conformation generation, trying to predict stable 3D structures from 2D molecular graphs. 
Existing machine learning approaches usually first predict distances between atoms and then generate a 3D structure satisfying the distances, where noise in predicted distances may induce extra errors during 3D coordinate generation. Inspired by the traditional force field methods for molecular dynamics simulation, in this paper, we propose a novel approach called~\method by directly estimating the gradient fields of the log density of atomic coordinates. The estimated gradient fields allow directly generating stable conformations via Langevin dynamics. However, the problem is very challenging as the gradient fields are roto-translation equivariant.
We notice that estimating the gradient fields of atomic coordinates can be translated to estimating the gradient fields of interatomic distances, and hence develop a novel algorithm based on recent score-based generative models to effectively estimate these gradients. Experimental results across multiple tasks show that \method outperforms previous state-of-the-art baselines by a significant margin.
The code is available at \url{https://github.com/DeepGraphLearning/ConfGF}.

\end{abstract}

\section{Introduction}
\label{sec:intro}

Graph-based representations of molecules, where nodes represent atoms and edges represent bonds, have been widely applied to a variety of molecular modeling tasks, ranging from property prediction~\cite{gilmer2017neural, Hu*2020Strategies} to molecule generation~\cite{jin2018junction, you2018graph, shi2020graphaf, xie2021mars}.
However, in reality a more natural and intrinsic representation for molecules is their three-dimensional structures, where atoms are represented by 3D coordinates. These structures are also widely known as molecular geometry or \textit{conformation}.


Nevertheless, obtaining valid and stable conformations for a molecule remains a long-standing challenge.
Existing approaches to molecular conformation generation mainly rely on molecular dynamics (MD)~\cite{de2016role}, where coordinates of atoms are sequentially updated based on the forces acting on each atom. 
The per-atom forces come either from computationally intensive Density Functional Theory (DFT)~\cite{parr1989DFT} or from hand-designed force fields~\cite{rappe1992uff, halgren1996merck-extension} which are often crude approximations of the actual molecular
energies~\cite{Kanal2017ASA}.
Recently, there are growing interests in developing machine learning methods for conformation generation~\cite{mansimov19molecular, simm2020GraphDG, xu2021cgcf}. In general, these approaches ~\cite{simm2020GraphDG, xu2021cgcf} are divided into two stages, which first predict the molecular distance geometry, i.e., interatomic distances, and then convert the distances to 3D coordinates via post-processing algorithms~\cite{liberti2014euclidean}. 
Such methods effectively model the roto-translation equivariance of molecular conformations and have achieved impressive performance. Nevertheless, such approaches generate conformations in a two-stage fashion, where noise in generated distances may affect 3D coordinate reconstruction, often leading to less accurate or even erroneous structures. Therefore, we are seeking for an approach that is able to generate molecular conformations within a single stage.


Inspired by the traditional force field methods for molecular dynamics simulation~\cite{Frenkel1996UnderstandingMS}, in this paper, we propose a novel and principled approach called \method for molecular conformation generation.
The basic idea is to directly learn the gradient fields of the log density w.r.t. the atomic coordinates, which can be viewed as pseudo-forces acting on each atom.
With the estimated gradient fields, conformations can be generated directly using Langevin dynamics~\cite{Max2011langevin}, which amounts to moving atoms toward the high-likelihood regions guided by pseudo-forces. The key challenge with this approach is that such gradient fields of atomic coordinates are roto-translation equivariant, i.e., the gradients rotate together with the molecular system and are invariant under translation.

To tackle this challenge, we observe that the interatomic distances are continuously differentiable w.r.t. atomic coordinates. Therefore, we propose to first estimate the gradient fields of the log density w.r.t. interatomic distances, and show that under a minor assumption, the gradient fields of the log density of atomic coordinates can be calculated from these w.r.t. distances in closed form using chain rule. We also demonstrate that such designed gradient fields satisfy roto-translation equivariance. In contrast to distance-based methods~\cite{simm2020GraphDG, xu2021cgcf}, our approach generates conformations in an one-stage fashion, i.e., the sampling process is done directly in the coordinate space, which avoids the errors induced by post-processing and greatly enhances generation quality. 
In addition, our approach
requires no surrogate losses and allows flexible network architectures, making it more capable to generate diverse and realistic conformations.

We conduct extensive experiments on GEOM~\cite{axelrod2020geom} and ISO17~\cite{simm2020GraphDG} benchmarks, and compare \method against previous state-of-the-art neural-methods as well as the empirical method on multiple tasks
ranging from conformation generation and distance modeling to property prediction.
Numerical results show that \method outperforms previous state-of-the-art baselines by a clear margin.

\section{Related Work}
\label{sec:related}

\textbf{Molecular Conformation Generation}
Existing works on molecular conformation generation mainly rely on molecular dynamics (MD)~\cite{de2016role}.
Starting from an initial conformation and a physical model for interatomic potentials~\cite{parr1989DFT, rappe1992uff}, the conformation is sequentially updated based on the forces acting on each atom.
However, these methods are inefficient, especially when molecules are large as they involve computationally expensive quantum mechanical calculations~\cite{parr1989DFT, shim2011computational, ballard2015exploiting}.
Another much more efficient but less accurate sort of approach is the empirical method, which fixes distances and angles between atoms in a molecule to idealized values according to a set of rules~\cite{Blaney2007DistanceGI}.

Recent advances in deep generative models open the door to data-driven approaches, which have shown great potential to strike a good balance between computational efficiency and accuracy.
For example, \citet{mansimov19molecular} introduce the Conditional Variational Graph Auto-Encoder (CVGAE) which takes molecular graphs as input and directly generates 3D coordinates for each atom. This method does not manage to model the roto-translation equivariance of molecular conformations, resulting in great differences between generated structures and ground-truth structures.
To address the issue, \citet{simm2020GraphDG} and \citet{xu2021cgcf} propose to model interatomic distances of molecular conformations using VAE and Continuous Flow respectively.
These approaches preserve roto-translation equivariance of molecular conformations and consist of two stages --- first generating distances between atoms and then feeding the distances to the distance geometry algorithms~\cite{liberti2014euclidean} to reconstruct 3D coordinates.
However, the distance geometry algorithms are vulnerable to noise in generated distances, which may lead to inaccurate or even erroneous structures.
Another line of research~\cite{Gogineni2020TorsionNet} leverages reinforcement learning for conformation generation by sequentially determining torsional angles, which relies on an additional classical force field for state transition and reward evaluation. 
Nevertheless, it is incapable of modeling other geometric properties such as bond angles and bond lengths, which distinguishes it from all other works.


\textbf{Neural Force Fields}
Neural networks have also been employed to estimate molecular potential energies and force fields~\cite{schutt2017schnet, zhang2018deepmd, hu2021forcenet}. 
The goal of these models is to predict energies and forces as accurate as possible, serving as an alternative to quantum chemical methods such as Density Function Theory (DFT) method.
Training these models requires access to molecular dynamics trajectories along with ground-truth energies and forces from expensive quantum mechanical calculations.
Different from these methods, our goal is to \textit{generate equilibrium conformations within a single stage}.
To this end, we define gradient fields\footnote{Such gradient fields, however, are not force fields as they are not energy conserving.} analogous to force fields, which serve as pseudo-forces acting on each atom that gradually move atoms toward the high-density areas until they converges to an equilibrium.
The only data we need to train the model is equilibrium conformations.



    

\section{Preliminaries}

\subsection{Problem Definition}
\textbf{Notations}
In this paper, a molecular graph is represented as an undirected graph $G=(V, E)$, where $V = \{v_1, v_2, \cdots, v_{\vert V \vert}\}$ is the set of nodes representing atoms, and $E = \{e_{ij} \mid (i, j) \subseteq V \times V\}$ is the set of edges between atoms in the molecule. 
Each node $v_i \in V$ is associated with a nuclear charge $Z_i$ and a 3D vector $\mathbf{r}_i \in \mathbb{R}^3$ , indicating its atomic type and atomic coordinate respectively.
Each edge $e_{ij} \in E$ is associated with a bond type and a scalar $d_{ij} = \Vert \mathbf{r}_i - \mathbf{r}_j \Vert_2$ denoting the Euclidean distance between the positions of $v_i$ and $v_j$. All distances between connected nodes can be represented as a vector $\mathbf{d} = \{d_{ij}\} \in \mathbb{R}^{\vert E \vert}$.

As the bonded edges in a molecule are not sufficient to characterize a conformation (bonds are rotatable), we extend the original molecular graph by adding auxiliary edges, i.e., virtual bonds, between atoms that are 2 or 3 hops away from each other, which is a widely-used technique in previous work~\cite{simm2020GraphDG, xu2021cgcf} for reducing the degrees of freedom in 3D coordinates.
The extra edges between second neighbors help fix angles between atoms, while those between third neighbors fix dihedral angles. 
Hereafter, we assume all molecular graphs are extended unless stated.

\textbf{Problem Definition}
Given an extended molecular graph $G=(V, E)$, our goal is to learn a generative model that generates molecular conformations $\mathbf{R}=(\mathbf{r}_1, \mathbf{r}_2, \cdots, \mathbf{r}_{\vert  V \vert}) \in \mathbb{R}^{\vert V \vert \times 3}$ based on the molecular graph.

\subsection{Score-Based Generative Modeling}
The \textit{score function} $
\mathbf{s}(\mathbf{x})$ of a continuously differentiable probability density $p(\mathbf{x})$ is defined as $\nabla_{\mathbf{x}} \log p(\mathbf{x})$.
Score-based generative modeling~\cite{song2019score, song2020slicedscore, meng2020autoregressive, song2020scoretech} is a class of generative models that approximate the score function of $p(\mathbf{x})$ using neural networks and generate new samples with Langevin dynamics~\cite{Max2011langevin}. Modeling the score function instead of the density function allows getting rid of the intractable partition function of $p(\mathbf{x})$ and avoiding extra computation on getting the higher-order gradients of an energy-based model~\cite{song2019score}.

To address the difficulty of estimating score functions in the regions without training data, \citet{song2019score} propose to perturb the data using Gaussian noise of various intensities and jointly estimate the score functions, i.e., $\mathbf{s}_\theta (\mathbf{x};\sigma_i)$, for all noise levels.
In specific, given a data distribution $p(\mathbf{x})$ and a sequence of noise levels $\{\sigma_i\}_{i=1}^L$ with a noise distribution $q_{\sigma_i}(\tilde{\mathbf{x}} \mid \mathbf{x})$, e.g., $\mathcal{N}(\tilde{\mathbf{x}} \mid \mathbf{x}, \sigma_i^2 \mathbf{I})$, the training objective $\ell(\theta; \sigma_i)$ for each noise level $\sigma_i$ is as follows:
\begin{equation}
    \frac{1}{2}
    \mathbb{E}_{p(\mathbf{x})}
    \mathbb{E}_{q_{\sigma_i}(\tilde{\mathbf{x}} \mid \mathbf{x})} 
    \big[\big\Vert
    \mathbf{s}_\theta(\tilde{\mathbf{x}}, \sigma_i) 
    -\nabla_{\tilde{\mathbf{x}}} \log q_{\sigma_i}(\tilde{\mathbf{x}} \mid \mathbf{x})
    \big\Vert_2^2 \big].
\end{equation}

After the noise conditional score networks $\mathbf{s}_\theta(\mathbf{x}, \sigma_i)$ are trained, samples are generated using annealed Langevin dynamics~\cite{song2019score}, where samples from each noise level serve as initializations for Langevin dynamics of the next noise level. For detailed design choices of noise levels and sampling hyperparameters, we refer readers to~\citet{song2020scoretech}.

\subsection{Equivariance in Molecular Geometry}
\label{subsec: equivariance}

\textit{Equivariance} is ubiquitous in physical systems, e.g., 3D roto-translation equivariance of molecular conformations or point clouds~\cite{Thomas2018TensorFN, Weiler20183DSC, sGDML, fuchs2020se3, Miller2020RelevanceOR, simm2021symmetryaware}. 
Endowing model with such inductive biases is critical for better generalization and successful learning~\cite{kohler20eqflow}.
For molecular geometry modeling, such equivariance is guaranteed by either taking gradients of a predicted invariant scalar energy~\cite{Klicpera2020Directional, schutt2017schnet} or using equivariant networks~\cite{fuchs2020se3, satorras2021en}.
Formally, a function $\mathcal{F}: \mathcal{X} \rightarrow \mathcal{Y}$ 
being equivariant can be represented as follows:\footnote{Strictly speaking, $\rho$ does not have to be the same on both sides of the equation, as long as it is a representation of the same group. Note that \textit{invariance} is a special case of equivariance.}
\begin{equation}
\label{eq:equivariance}
    \mathcal{F} \circ \rho (x) = \rho \circ \mathcal{F} (x),
\end{equation}
where $\rho$ is a transformation function, e.g., rotation. Intuitively, Eq.~\ref{eq:equivariance} says that applying the $\rho$ on the input has the same effect as applying it to the output.

In this paper, we aim to model the score function of $p(\mathbf{R} \mid G)$, i.e., $\mathbf{s}(\mathbf{R}) =\nabla_{\mathbf{R}}\log p(\mathbf{R} \mid G)$. 
As $\log p(\mathbf{R} \mid G)$ is roto-translation invariant with respect to conformations, the score function $\mathbf{s}(\mathbf{R})$ is therefore roto-translation equivariant with respect to conformations. To pursue a generalizable and elegant approach, we explicitly build such equivariance into the model design.

\begin{figure*}[t]
	\centering
    \includegraphics[width=0.96\linewidth]{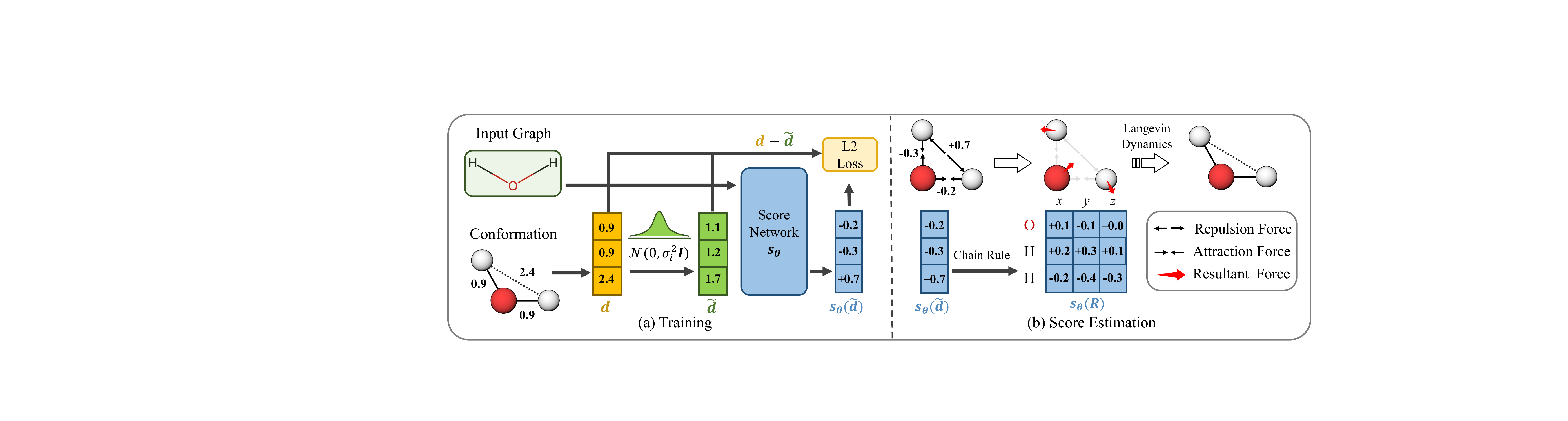}
    \caption{Overview of the proposed \method approach. (a) Illustration of the training procedure. We perturb the interatomic distances with random Gaussian noise of various magnitudes, and train the noise conditional score network with denoising score matching.
    (b) Score estimation of atomic coordinates via chain rule. The procedure amounts to calculating the resultant force from a set of interatomic forces.
    }
    \label{fig:training}
    \vspace{-2pt}
\end{figure*}

\section{Proposed Method}
\label{sec:method}

\subsection{Overview}
Our approach extends the denoising score matching~\cite{vincent2011denoising, song2019score} to molecular conformation generation by estimating the gradient fields of the log density of atomic coordinates, i.e., $\nabla_{\mathbf{R}}\log p(\mathbf{R} \mid G)$. 
Directly parameterizing such gradient fields using vanilla Graph Neural Networks (GNNs)~\cite{duvenaud2015convolutional,kearnes2016molecular,kipf2016semi,gilmer2017neural,schutt2017quantum, schutt2017schnet} is problematic, as these GNNs only leverage the node, edge or distance information, resulting in node representations being roto-translation invariant. 
However, the gradient fields of atomic coordinates should be roto-translation equivariant (see Section~\ref{subsec: equivariance}), i.e., the vector fields should rotate together with the molecular system and be invariant under translation.

To tackle this issue, we assume that the log density of atomic coordinates $\mathbf{R}$ given the molecular graph $G$ can be parameterized as $\log p_\theta(\mathbf{R} \mid G) \triangleq f_G \circ g_G(\mathbf{R}) = f_G(\mathbf{d})$ up to a constant, where $g_G: \mathbb{R}^{\vert V\vert \times 3} \rightarrow \mathbb{R}^{\vert E \vert}$ denotes a function that maps a set of atomic coordinates to a set of interatomic distances, and
$f_G: \mathbb{R}^{\vert E \vert} \rightarrow \mathbb{R}$ 
is a graph neural network that estimates the negative energy of a molecule based on the interatomic distances $\mathbf{d}$ and the graph representation $G$. 
Such design is a common practice in existing literature~\cite{schutt2017schnet, Hu*2020Strategies, Klicpera2020Directional}, which is favored as it ensures several physical invariances, e.g., rotation and translation invariance of energy prediction. 
We observe that the interatomic distances $\mathbf{d}$ are continuously differentiable w.r.t. atomic coordinates $\mathbf{R}$. 
Therefore, the two gradient fields, $\nabla_{\mathbf{R}}\log p_\theta(\mathbf{R} \mid G)$ and $\nabla_{\mathbf{d}}\log p_\theta(\mathbf{d} \mid G)$, are connected by chain rule, and the gradients can be backpropagated from $\mathbf{d}$ to $\mathbf{R}$ as follows:

\begin{equation}
\label{eq: chainrule}
\begin{aligned}
    \forall i, \mathbf{s}_\theta(\mathbf{R})_i
    &= \frac{\partial f_G(\mathbf{d})}{\partial \mathbf{r}_i} 
    =\sum\limits_{(i,j), e_{ij} \in E} \frac{\partial f_G(\mathbf{d})}{\partial d_{ij}} \cdot \frac{\partial d_{ij}}{\partial \mathbf{r}_i} \\
    &= \sum\limits_{j \in N(i)}
    \frac{1}{d_{ij}} \cdot \frac{\partial f_G(\mathbf{d})}{\partial d_{ij}} \cdot (\mathbf{r}_i - \mathbf{r}_j) \\
    &= \sum\limits_{j \in N(i)}
    \frac{1}{d_{ij}} \cdot \mathbf{s}_\theta(\mathbf{d})_{ij} \cdot (\mathbf{r}_i - \mathbf{r}_j),
\end{aligned}
\end{equation}
where $\mathbf{s}_\theta(\mathbf{R})_i=\nabla_{\mathbf{r}_i}\log p_\theta(\mathbf{R} \mid G)$ and $\mathbf{s}_\theta(\mathbf{d})_{ij}=\nabla_{d_{ij}}\log p_\theta(\mathbf{d} \mid G)$. Here $N(i)$ denotes $i^{th}$ atom{'}s neighbors. The second equation of Eq.~\ref{eq: chainrule} holds because of the chain rule over the partial derivative of the composite function $f_G(\mathbf{d}) = f_G \circ g_G(\mathbf{R})$.

Inspired by the above observations, we take the interatomic pairwise distances as intermediate variables and propose to first estimate the gradient fields of interatomic distances corresponding to different noise levels, i.e., training a noise conditional score network $\mathbf{s}_\theta(\mathbf{d}, \sigma)$ to approximate $\nabla_{\tilde{\mathbf{d}}}\log q_{\sigma}(\tilde{\mathbf{d}} \mid G)$. 
We then calculate the gradient fields of atomic coordinates, i.e., $\mathbf{s}_\theta(\mathbf{R}, \sigma)$, from $\mathbf{s}_\theta(\mathbf{d}, \sigma)$ via Eq.~\ref{eq: chainrule}, without relying on any extra parameters. 
Such designed score network $\mathbf{s}_\theta(\mathbf{R}, \sigma)$ enjoys the property of roto-translation equivariance. Formally, we have:
\begin{proposition}
\label{prop:roto}
\textit{Under the assumption that we parameterize $\log p_\theta(\mathbf{R} \mid G)$ as a function of interatomic distances $\mathbf{d}$ and molecular graph representation $G$, the score network defined in Eq.~\ref{eq: chainrule} is roto-translation equivariant, i.e., the gradients rotate together with the molecule system and are invariant under translation.
}
\end{proposition}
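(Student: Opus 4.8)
The plan is to verify the two components of roto-translation equivariance—invariance under translation and equivariance under rotation—by tracking how each factor on the right-hand side of Eq.~\ref{eq: chainrule} transforms. The key observation, which I would establish first, is that every scalar quantity in that formula depends on the conformation only through the interatomic distances $\mathbf{d}$ (together with the fixed graph $G$), and that these distances are themselves invariant under both rigid motions. For a translation $\mathbf{r}_k \mapsto \mathbf{r}_k + \mathbf{t}$ this is immediate, since the offset cancels in each difference $\mathbf{r}_i - \mathbf{r}_j$. For a rotation $\mathbf{r}_k \mapsto R \mathbf{r}_k$ with $R \in SO(3)$ it follows from norm preservation: $\Vert R\mathbf{r}_i - R\mathbf{r}_j \Vert_2 = \Vert R(\mathbf{r}_i - \mathbf{r}_j) \Vert_2 = \Vert \mathbf{r}_i - \mathbf{r}_j \Vert_2$. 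Hence $d_{ij}$, and therefore both $1/d_{ij}$ and $\mathbf{s}_\theta(\mathbf{d})_{ij} = \nabla_{d_{ij}} f_G(\mathbf{d})$, are unchanged by either transformation, since $f_G$ sees only $\mathbf{d}$ and $G$.

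With the scalar factors pinned down, the only factor that moves is the difference vector $\mathbf{r}_i - \mathbf{r}_j$, and the two cases follow by direct substitution into Eq.~\ref{eq: chainrule}. Under translation, $\mathbf{r}_i - \mathbf{r}_j$ is itself invariant, so every term of the sum, and hence $\mathbf{s}_\theta(\mathbf{R})_i$, is unchanged; this gives invariance under translation. Under rotation, each difference transforms as $\mathbf{r}_i - \mathbf{r}_j \mapsto R(\mathbf{r}_i - \mathbf{r}_j)$, while the coefficients stay fixed, so factoring $R$ out of the finite sum over $j \in N(i)$ yields $\mathbf{s}_\theta(\mathbf{R})_i \mapsto R\, \mathbf{s}_\theta(\mathbf{R})_i$. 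This is precisely the equivariance condition of Eq.~\ref{eq:equivariance}, with $\rho$ realized as the rotation $R$ acting on both the input coordinates and the output gradient field.

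I do not expect a genuine obstacle here: the argument is a substitution followed by factoring a single linear map through a linear combination of vectors. The only point requiring care is the bookkeeping that justifies treating $1/d_{ij}$ and $\mathbf{s}_\theta(\mathbf{d})_{ij}$ as invariants—one must confirm that the parameterization assumption $\log p_\theta(\mathbf{R} \mid G) \triangleq f_G(\mathbf{d})$ genuinely confines all conformation dependence of these scalars to $\mathbf{d}$, so that no residual coordinate dependence leaks in through the network $f_G$, and that the neighbor set $N(i)$ is fixed by $G$ and thus unaffected by rigid motions. Granting the assumption that the proposition explicitly invokes, the claim reduces to the invariance of Euclidean distances together with the linearity of the chain-rule expression in the difference vectors.
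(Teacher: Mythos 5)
Your proposal is correct and follows essentially the same argument as the paper's proof: both rest on the invariance of the interatomic distances (hence of the scalar coefficients $1/d_{ij}$ and $\mathbf{s}_\theta(\mathbf{d})_{ij}$) under rigid motions, with the rotation then factoring linearly out of the sum over $N(i)$ and the translation cancelling in the differences $\mathbf{r}_i - \mathbf{r}_j$. The only cosmetic difference is that the paper verifies both transformations in a single composed substitution $\hat{\mathbf{r}}_i = D\mathbf{r}_i + \mathbf{t}$, whereas you treat translation and rotation separately; the content is identical.
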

\begin{proof}[Proof Sketch]
The score network of distances $\mathbf{s}_\theta(\mathbf{d})$ is roto-translation invariant as it only depends on interatomic distances, and the vector
$(\mathbf{r}_i - \mathbf{r}_j) / d_{ij}$ rotates together with the molecule system, which endows the score network of atomic coordinates $\mathbf{s}_\theta(\mathbf{R})$ with roto-translation equivariance.
See Supplementary material for a formal proof.
\end{proof}


After training the score network, based on the estimated gradient fields of atomic coordinates corresponding to different noise levels, conformations are generated by annealed Langevin dynamics~\cite{song2019score}, combining information from all noise levels. 
The overview of \method is illustrated in Figure~\ref{fig:training} and Figure~\ref{fig:sampling}.
Below we describe the design of the noise conditional score network in Section~\ref{subsec:scorenet}, introduce the generation procedure in Section~\ref{subsec:generation}, and show how the designed gradient fields are connected with the classical laws of mechanics in Section~\ref{subsec: discussion}.

\begin{figure*}[t]
	\centering
    \includegraphics[width=0.97\linewidth]{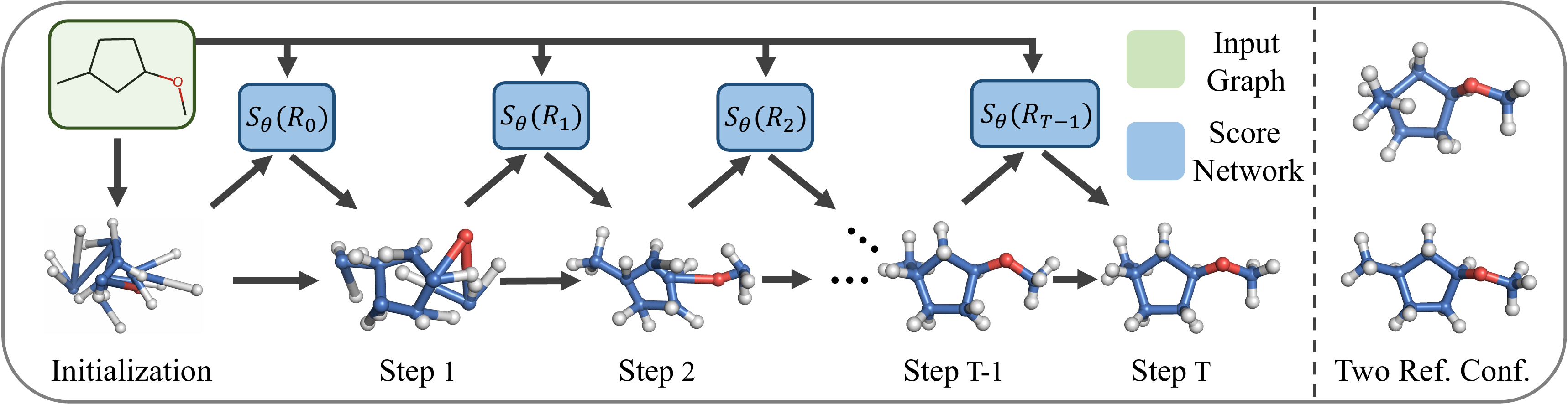}
    \caption{Generation procedure of the proposed \method via Langevin dynamics. Starting from a random initialization, the conformation is sequentially updated with the gradient information of atomic coordinates calculated from the score network.
    }
    \label{fig:sampling}
    \vspace{+2pt}
\end{figure*}

\subsection{Noise Conditional Score Networks for Distances}
\label{subsec:scorenet}

Let $\{\sigma_i\}_{i=1}^L$ be a positive geometric progression with common ratio $\gamma$, i.e., ${\sigma_{i}} / {\sigma_{i-1}} = \gamma$. We define the perturbed conditional distance distribution $q_{\sigma}(\tilde{\mathbf{d}} \mid G)$ to be $\int p(\mathbf{d} \mid G) \mathcal{N}(\tilde{\mathbf{d}} \mid \mathbf{d}, \sigma_i^2 \mathbf{I}) d\mathbf{d} $. In this setting, we aim to learn a conditional score network to jointly estimate the scores of all perturbed distance distributions, i.e., $\forall \sigma \in \{\sigma_i\}_{i=1}^L: \mathbf{s}_\theta(\tilde{\mathbf{d}}, \sigma) \approx \nabla_{\tilde{\mathbf{d}}}\log q_{\sigma}(\tilde{\mathbf{d}} \mid G)$. 
Note that $\mathbf{s}_\theta(\tilde{\mathbf{d}}, \sigma) \in \mathbb{R}^{\vert E \vert}$, we therefore formulate it as an edge regression problem.

Given a molecular graph $G = (V, E)$ with a set of interatomic distances $\mathbf{d} \in \mathbb{R}^{\vert E \vert}$ computed from its conformation $\mathbf{R} \in \mathbb{R}^{\vert V \vert \times 3}$, we first embed the node attributes and the edge attributes into low dimensional feature spaces using feedforward networks:
\begin{equation}
\label{eq:embed}
\begin{aligned}
    \mathbf{h}_i^0 &= \operatorname{MLP}(Z_i), \quad \forall v_i \in V, \\
    \mathbf{h}_{e_{ij}} &= \operatorname{MLP}(e_{ij}, d_{ij}), \quad \forall e_{ij} \in E.
\end{aligned}
\end{equation}
We then adopt a variant of Graph Isomorphism Network (GINs)~\cite{xu2018how, Hu*2020Strategies} to compute node embeddings based on graph structures. At each layer, node embeddings are updated by aggregating messages from neighboring nodes and edges:
\begin{equation}
\label{eq:gnn}
\begin{aligned}
    \mathbf{h}_i^l = \operatorname{MLP}(\mathbf{h}_i^{l-1} + \sum\limits_{j \in N(i)}\operatorname{ReLU}(\mathbf{h}_j^{l-1} + \mathbf{h}_{e_{ij}})),
\end{aligned}
\end{equation}
where $N(i)$ denotes $i^{th}$ atom{'}s neighbors. After $N$ rounds of message passing, we compute the final edge embeddings by concatenating the corresponding node embeddings for each edge as follows:
\begin{equation}
\begin{aligned}
    \mathbf{h}_{e_{ij}}^{o} = \mathbf{h}_i^{N} \ \Vert \ \mathbf{h}_j^{N} \ \Vert \  \mathbf{h}_{e_{ij}},
\end{aligned}
\end{equation}
where $\Vert$ denotes the vector concatenation operation, and $\mathbf{h}_{e_{ij}}^{o}$ denotes the final embeddings of edge $e_{ij} \in E$.
Following the suggestions of~\citet{song2020scoretech}, we parameterize the noise conditional network with $\mathbf{s}_\theta(\tilde{\mathbf{d}}, \sigma) = \mathbf{s}_\theta(\tilde{\mathbf{d}}) / \sigma$ as follows:
\begin{equation}
\begin{aligned}
    \mathbf{s}_\theta(\tilde{\mathbf{d}})_{ij} &= \operatorname{MLP}(\mathbf{h}_{e_{ij}}^{o}), \quad \forall e_{ij} \in E,
\end{aligned}
\end{equation}
where $\mathbf{s}_\theta(\tilde{\mathbf{d}})$ is an unconditional score network, and the $\operatorname{MLP}$ maps edge embeddings to a scalar for each $e_{ij} \in E$. Note that $\mathbf{s}_\theta(\tilde{\mathbf{d}}, \sigma)$ is roto-translation invariant as we only use interatomic distances.
Since $\nabla_{\tilde{\mathbf{d}}}\log q_{\sigma}(\tilde{\mathbf{d}} \mid \mathbf{d}, G) = -(\tilde{\mathbf{d}} - \mathbf{d}) / \sigma^2$, the training loss of $\mathbf{s}_\theta(\tilde{\mathbf{d}}, \sigma)$ is thus:

\begin{equation}
\begin{aligned}
\frac{1}{2L}\sum\limits_{i=1}^L  \lambda(\sigma_i) \mathbb{E}_{p(\mathbf{d} \mid G)}\mathbb{E}_{q_{\sigma_i}(\tilde{\mathbf{d}} \mid \mathbf{d}, G)} \left[\left\Vert
    \frac{\mathbf{s}_\theta(\tilde{\mathbf{d}})}{\sigma_i} + \frac{\tilde{\mathbf{d}} - \mathbf{d}}{\sigma_i^2}
    \right\Vert_2^2 \right],
\end{aligned}    
\end{equation}
where $\lambda(\sigma_i) = \sigma_i^2$ is the coefficient function weighting losses of different noise levels according to~\citet{song2019score}, and
all expectations can be efficiently estimated using Monte Carlo estimation.

\subsection{Conformation Generation via Annealed Langevin Dynamics}
\label{subsec:generation}
Given the molecular graph representation $G$ and the well-trained noise conditional score networks, molecular conformations are generated using annealed Langevin dynamics, i.e., gradually anneal down the noise level from large to small. 
We start annealed Langevin dynamics by first sampling an initial conformation $\mathbf{R}_0$ from some fixed prior distribution, e.g., uniform distribution or Gaussian distribution.
Empirically, the choice of the prior distribution is arbitrary as long as the supports of the perturbed distance distributions cover the prior distribution, and we take the Gaussian distribution as prior distribution in our case.
Then, we update the conformations by sampling from a series of trained noise conditional score networks $\{\mathbf{s}_\theta(\mathbf{R}, \sigma_i)\}_{i=1}^L$ sequentially.
For each noise level $\sigma_i$, starting from the final sample from the last noise level,
we run Langevin dynamics for $T$ steps with a gradually decreasing step size $\alpha_i = \epsilon \cdot \sigma_i^2/\sigma_L^2$ for each noise level.
In specific, at each sampling step $t$, we first obtain the interatomic distances $\mathbf{d}_{t-1}$ based on the current conformation $\mathbf{R}_{t-1}$, and calculate the score function $\mathbf{s}_\theta(\mathbf{R}_{t-1}, \sigma_i)$ via Eq.~\ref{eq: chainrule}. The conformation is then updated using the gradient information from the score network. The whole sampling algorithm is presented in Algorithm~\ref{alg:anneal}.

\begin{algorithm}[H]
	\caption{Annealed Langevin dynamics sampling}
	\label{alg:anneal}
    \renewcommand\algorithmiccomment[1]{\hfill $\triangleright$ {#1}}	
	\begin{algorithmic}[1]
	    \INPUT{molecular graph $G$, noise levels $\{\sigma_i\}_{i=1}^L$, the smallest step size $\epsilon$, and the number of sampling steps per noise level $T$.}
	    \STATE{Initialize conformation $\mathbf{R}_0$ from a prior distribution}
	    \FOR{$i \gets 1$ to $L$}
	        \STATE{$\alpha_i \gets \epsilon \cdot \sigma_i^2/\sigma_L^2$} \COMMENT{$\alpha_i$ is the step size.}
            \FOR{$t \gets 1$ to $T$}
                \STATE{$\mathbf{d}_{t-1} = g_G(\mathbf{R}_{t-1})$}
                \COMMENT{get distances from $\mathbf{R}_{t-1}$}
                \STATE{$\mathbf{s}_\theta(\mathbf{R}_{t-1}, \sigma_i) \gets \operatorname{convert}(\mathbf{s}_\theta(\mathbf{d}_{t-1}, \sigma_i))$}
                \COMMENT{Eq.~\ref{eq: chainrule}.}
                \STATE{Draw $\mathbf{z}_t \sim \mathcal{N}(0, \mathbf{I})$}
                \STATE{$\mathbf{R}_t \gets \mathbf{R}_{t-1} + \alpha_i \mathbf{s}_{\theta}(\mathbf{R}_{t-1}, \sigma_i) + \sqrt{2 \alpha_i}\mathbf{z}_t$}
            \ENDFOR
            \STATE{$\mathbf{R}_0 \gets \mathbf{R}_T$}
        \ENDFOR
        \OUTPUT{Generated conformation $\mathbf{R}_T$.}
	\end{algorithmic}
\end{algorithm}

\subsection{Discussion}
\label{subsec: discussion}
\textbf{Physical Interpretation of Eq.~\ref{eq: chainrule}}
From a physical perspective, $\mathbf{s}_\theta(\mathbf{d}, \sigma)$ reflects how the distances between connected atoms should change to increase the probability density of the conformation $\mathbf{R}$, which can be viewed as a network that predicts the repulsion or attraction forces between atoms. 
Moreover, as shown in Figure~\ref{fig:training}, $\mathbf{s}_\theta(\mathbf{R}, \sigma)$ can be viewed as resultant forces acting on each atom computed from Eq.~\ref{eq: chainrule}, where $(\mathbf{r}_i- \mathbf{r}_j) / d_{ij}$ is a unit vector indicating the direction of each component force and $\mathbf{s}_\theta(\mathbf{d}, \sigma)_{ij}$ specifies the strength of each component force. 

\textbf{Handling Stereochemistry}
Given a molecular graph, our \method generates all possible stereoisomers due to the stochasticity of the Langevin dynamics. 
Nevertheless, our model is very general and can be extended to handle stereochemistry by predicting the gradients of torsional angles with respect to coordinates, as torsional angles can be calculated in closed-form from 3D coordinates and are also invariant under rotation and translation. Since the stereochemistry study of conformation generation is beyond the scope of this work, we leave it for future work.

\section{Experiments}
\label{sec:experiment}

\begin{figure*}[htbp]
	\centering
    \includegraphics[width=\linewidth]{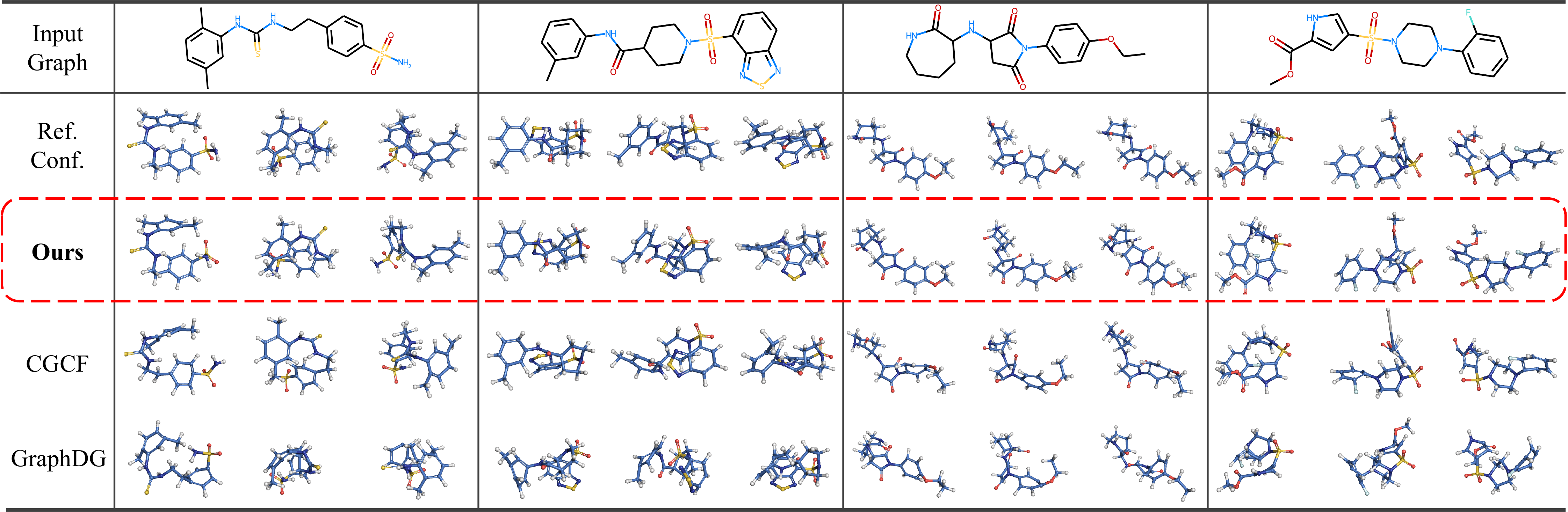}
    \caption{Visualization of conformations generated by different approaches. For each method, we sample multiple conformations and pick ones
    that are best-aligned with the reference ones, based on four random molecular graphs from the test set of GEOM-Drugs.}
    \label{fig:exp_drugs}
    \vspace{-10pt}    
\end{figure*}

Following existing works on conformation generation, we evaluate the proposed method using the following three standard tasks:
\begin{itemize}
\item\textbf{Conformation Generation} tests models' capacity to learn distribution of conformations by measuring the quality of generated conformations compared with reference ones (Section~\ref{subsec:conformation_generation}).
\item\textbf{Distributions Over Distances} evaluates the discrepancy with respect to distance geometry between the generated conformations and the ground-truth conformations (Section~\ref{subsec: distribution_over_distance}). 
\item\textbf{Property Prediction} is first proposed in~\citet{simm2020GraphDG}, which estimates chemical properties for molecular graphs based on a set of sampled conformations (Section~\ref{subsec: molecular_property}). This can be useful in a variety of applications such as drug discovery, where we want to predict molecular properties from microscopic structures.
\end{itemize}

Below we describe setups that are shared across the tasks. Additional setups are provided in the task-specific sections.

\textbf{Data}
Following~\citet{xu2021cgcf}, we use the {GEOM-QM9} and {GEOM-Drugs}~\cite{axelrod2020geom} datasets for the conformation generation task. We randomly draw 40,000 molecules and select the 5 most likely conformations\footnote{sorted by energy} for each molecule,
and draw 200 molecules from the remaining data, resulting in a training set with 200,000 conformations and a test set with 22,408 and 14,324 conformations for GEOM-QM9 and GEOM-Drugs respectively. 
We evaluate the distance modeling task on the {ISO17} dataset~\cite{simm2020GraphDG}.
We use the default split of~\citet{simm2020GraphDG}, resulting in a training set with 167 molecules and the test set with 30 molecules.
For property prediction task, we randomly draw 30 molecules from the GEOM-QM9 dataset with the training molecules excluded.


\textbf{Baselines}
We compare our approach with four state-of-the-art methods for conformation generation. In specific, \textbf{CVGAE}~\cite{mansimov19molecular} is built upon conditional variational graph autoencoder, which directly generates atomic coordinates based on molecular graphs. \textbf{GraphDG}~\cite{simm2020GraphDG} and \textbf{CGCF}~\cite{xu2021cgcf} are distance-based methods, which rely on post-processing algorithms to convert distances to final conformations. The difference of the two methods lies in how the neural networks are designed, i.e., VAE vs. Continuous Flow. \textbf{RDKit}~\cite{sereina2015rdkit} is a classical Euclidean Distance Geometry-based approach.
Results of all baselines across experiments are obtained by running their official codes unless stated.

\textbf{Model Configuration}
\method is implemented in Pytorch~\cite{pytorch2017automatic}.
The GINs is implemented with $N=4$ layers and the hidden dimension is set as 256 across all modules. For training, we use an exponentially-decayed learning rate starting from 0.001 with a decay rate of 0.95. The model is optimized with Adam~\cite{kingma2014adam} optimizer on a single Tesla V100 GPU. All hyperparameters related to noise levels as well as annealed Langevin dynamics are selected according to~\citet{song2020scoretech}. See Supplementary material for full details.

\subsection{Conformation Generation}
\label{subsec:conformation_generation}
\textbf{Setup}
The first task is to generate conformations with high \textit{diversity} and \textit{accuracy} based on molecular graphs. For each molecular graph in the test set, we sample twice as many conformations as the reference ones from each model. 
To measure the discrepancy between two conformations, following existing work~\cite{hawkins2017conformation,li2021conformationguided}, we adopt the Root of Mean Squared Deviations (RMSD) of the heavy atoms between aligned conformations:
\begin{equation}
\label{eq: rmsd}
\operatorname{RMSD}(\mathbf{R}, \mathbf{\hat{R}}) = \min\limits_{\Phi} \Big(
\frac{1}{n}\sum\limits_{i=1}^n \Vert \Phi(\mathbf{R}_i) - \hat{\mathbf{R}}_i \Vert^2 
\Big)^{\frac{1}{2}},
\end{equation}
where $n$ is the number of heavy atoms and $\Phi$ is an alignment function that aligns two conformations by rotation and translation. Following~\citet{xu2021cgcf}, we report the \textbf{Cov}erage (COV) and \textbf{Mat}ching (MAT) score to measure the diversity and accuracy respectively:
\begin{equation}
\begin{aligned}
\operatorname{COV}(S_g, S_r) &= \frac{ \Big\vert
\Big\{\mathbf{R}\in S_r \vert \operatorname{RMSD}(\mathbf{R}, \hat{\mathbf{R}})< \delta,  \hat{\mathbf{R}} \in S_g\Big\}
\Big\vert} {\vert S_r \vert} \\
\operatorname{MAT}(S_g, S_r) &= \frac{1}{\vert S_r \vert}
\sum\limits_{\mathbf{R} \in S_r}
\min\limits_{\hat{\mathbf{R}} \in S_g} \operatorname{RMSD}(\mathbf{R}, \hat{\mathbf{R}}),
\end{aligned}
\end{equation}
where $S_g$ and $S_r$ are generated and reference conformation set respectively. $\delta$ is a given RMSD threshold. In general, a higher COV score indicates a better diversity, while a lower MAT score indicates a better accuracy.

\textbf{Results}
We evaluate the mean and median COV and MAT scores on both GEOM-QM9 and GEOM-Drugs datasets for all baselines, where mean and median are taken over all molecules in the test set.
Results in Table~\ref{tab:confgen} show that our \method achieves the state-of-the-art performance on all four metrics. In particular, as a score-based model that directly estimates the gradients of the log density of atomic coordinates, our approach consistently outperforms other neural models by a large margin. By contrast, the performance of GraphDG and CGCF is inferior to ours due to their two-stage generation procedure, which incurs extra errors when converting distances to conformations.
The CVGAE seems to struggle with both of the datasets as it fails to model the roto-translation equivariance of 
conformations.

When compared with the rule-based method, our approach outperforms the RDKit on 7 out of 8 metrics without relying on any post-processing algorithms, e.g., force fields, as done in previous work~\cite{mansimov19molecular,xu2021cgcf}, making it appealing in real-world applications.
Figure~\ref{fig:exp_drugs} visualizes several conformations that are best-aligned with the reference ones generated by different methods, which demonstrates \method{'}s strong ability to generate realistic and diverse conformations.

\begin{table}[htbp]
    \centering
    \vspace{-7pt}
    
    \caption{COV and MAT scores of different approaches on GEOM-QM9 and GEOM-Drugs datasets.
    The threshold $\delta$ is set as $0.5$\AA~for QM9 and $1.25$\AA~for Drugs following~\citet{xu2021cgcf}. See Supplementary material for additional results.
    }
    \label{tab:confgen}
    \resizebox{\columnwidth}{!}{
    \scalebox{1}{
        \begin{tabular}{c|l | cccc}
\toprule
  \multirow{2}{*}{Dataset} 
& \multirow{2}{*}{Method}
& \multicolumn{2}{c}{COV (\%)} 
& \multicolumn{2}{c}{MAT (\AA)} \\

& & Mean & Median & Mean & Median \\
\midrule
\multirow{5}{*}{QM9}
& RDKit & 83.26 & 90.78 & 0.3447 & 0.2935 \\
& CVGAE & 0.09 & 0.00 & 1.6713 & 1.6088 \\
& GraphDG & 73.33 & 84.21 & 0.4245 & 0.3973 \\
& CGCF & 78.05 & 82.48 & 0.4219 & 0.3900 \\
\cmidrule{2-6}
& \textbf{\method} & \textbf{88.49} & \textbf{94.13} & \textbf{0.2673} & \textbf{0.2685} \\

\midrule
\multirow{5}{*}{Drugs}
& RDKit & 60.91 & 65.70 & 1.2026 & \textbf{1.1252} \\
& CVGAE & 0.00 & 0.00 & 3.0702 & 2.9937 \\
& GraphDG & 8.27 & 0.00 & 1.9722 & 1.9845 \\
& CGCF & 53.96 & 57.06 & 1.2487 & 1.2247 \\
\cmidrule{2-6}
& \textbf{\method} & \textbf{62.15} & \textbf{70.93} & \textbf{1.1629} & 1.1596 \\

\bottomrule
\end{tabular}
    }}
    \vspace{-7pt}
\end{table}

\begin{table}[thbp]
    \centering
    \vspace{-7pt}    
    \caption{Accuracy of the distributions over distances generated by different approaches compared to the ground-truth. Two different metrics are used: mean and median MMD between ground-truth conformations and generated ones. Results of baselines are taken from~\citet{xu2021cgcf}.
    }
    \label{tab:distgen}
    \resizebox{\columnwidth}{!}{
    \scalebox{1}{
        \begin{tabular}{l | cc  cc  cc}
\toprule
\multirow{2}{*}{Method} & 
\multicolumn{2}{c}{Single} &
\multicolumn{2}{c}{Pair} &
\multicolumn{2}{c}{All} \\
& Mean & Median & Mean & Median & Mean & Median \\
\midrule
RDKit & 3.4513 & 3.1602 & 3.8452 & 3.6287 & 4.0866 & 3.7519 \\
CVGAE & 4.1789 & 4.1762 & 4.9184 & 5.1856 & 5.9747 & 5.9928 \\
GraphDG & 0.7645 & 0.2346 & 0.8920 & 0.3287 & 1.1949 & 0.5485 \\
CGCF & 0.4490 & \textbf{0.1786} & 0.5509 & \textbf{0.2734} & 0.8703 & 0.4447\\
\midrule
\textbf{\method} & \textbf{0.3684} & 0.2358 & \textbf{0.4582} & 0.3206 & \textbf{0.6091} & \textbf{0.4240} \\
\bottomrule
\end{tabular}

    }}
    \vspace{-6pt}    
\end{table}


\subsection{Distributions Over Distances}
\label{subsec: distribution_over_distance}

\textbf{Setup}
The second task is to match the generated distributions over distances with the ground-truth.
As \method is not designed for modeling molecular distance geometry, we treat interatomic distances calculated from generated conformations as generated distance distributions.
For each molecular graph in the test set, we sample 1000 conformations across all methods.
Following previous work~\cite{simm2020GraphDG, xu2021cgcf}, we report the discrepancy between generated distributions and the ground-truth distributions, by calculating maximum mean discrepancy (MMD)~\cite{gretton2012kernel} using a Gaussian kernel. In specific, for each graph G in the test set, we ignore edges connected with the H atoms and evaluate distributions of all distances $p(\mathbf{d} \mid G)$ (All), pairwise distances $p(d_{ij}, d_{uv} \mid G)$ (Pair), and individual distances $p(d_{ij} \mid G)$ (Single). 


\textbf{Results}
As shown in Table~\ref{tab:distgen}, the CVGAE suffers the worst performance due to its poor capacity.
Despite the superior performance of RDKit on the conformation generation task, its performance falls far short of other neural models. This is because RDKit incorporates an additional molecular force field~\cite{rappe1992uff} to find equilibrium states of conformations after the initial coordinates are generated. 
By contrast, although not tailored for molecular distance geometry, our \method still achieves competitive results on all evaluation metrics. 
In particular, the \method outperforms the state-of-the-art method CGCF on 4 out of 6 metrics, and achieves comparable results on the other two metrics.




\subsection{Property Prediction}
\label{subsec: molecular_property}

\textbf{Setup}
The third task is to predict the \textit{ensemble property} \cite{axelrod2020geom} of molecular graphs.
Specifically, the ensemble property of a molecular graph is calculated by aggregating the property of different conformations.
For each molecular graph in the test set, we first calculate the energy, HOMO and LUMO of all the ground-truth conformations using the quantum chemical calculation package Psi4~\cite{Smith2020Psi41O}. 
Then, we calculate ensemble properties including average energy $ \overline{E} $, lowest energy $E_\text{min}$, average HOMO-LUMO gap $ \overline{\Delta \epsilon} $, minimum gap $\Delta \epsilon_\text{min}$, and maximum gap $\Delta \epsilon_\text{max}$ based on the conformational properties for each molecular graph.
Next, we use \method and other baselines to generate 50 conformations for each molecular graph and calculate the above-mentioned ensemble properties in the same way.
We use mean absolute error (MAE) to measure the accuracy of ensemble property prediction.
We exclude CVGAE from this analysis due to its poor quality following~\citet{simm2020GraphDG}.

\textbf{Results}
As shown in Table \ref{tab:property_pred}, \method outperforms all the machine learning-based methods by a large margin and achieves better accuracy than RDKit in lowest energy $E_\text{min}$ and maximum HOMO-LUMO gap $\Delta\epsilon_\text{max}$.
Closer inspection on the generated samples shows that, although \method has better generation quality in terms of diversity and accuracy, it sometimes generates a few outliers which have negative impact on the prediction accuracy. 
We further present the median of absolute errors in Table \ref{tab:property_pred_median}. When measured by median absolute error, \method has the best accuracy in average energy and the accuracy of other properties is also improved, which reveals the impact of outliers. 

\begin{table}[htbp]
    \centering
    \vspace{-7pt}    
    \caption{Mean absolute errors (MAE) of predicted ensemble properties. Unit: eV.}
    \label{tab:property_pred}
    \resizebox{\columnwidth}{!}{
    \scalebox{1}{
        \begin{tabular}{l | ccccc}
\toprule
Method & $ \overline{E} $ & $E_\text{min}$ & $ \overline{\Delta\epsilon} $ & $\Delta \epsilon_\text{min}$  & $\Delta \epsilon_\text{max}$ \\
\midrule
RDKit & \bf 0.9233 & 0.6585 & \bf 0.3698 & \bf 0.8021 & 0.2359 \\
GraphDG & 9.1027 & 0.8882 & 1.7973 & 4.1743 & 0.4776 \\
CGCF & 28.9661 & 2.8410 & 2.8356 & 10.6361 & 0.5954 \\
\bf \method & 2.7886 & \bf 0.1765 & 0.4688 & 2.1843 & \bf 0.1433 \\
\bottomrule
\end{tabular}

    }}
    \vspace{-4pt}    
\end{table}

\begin{table}[htbp]
    \centering
    \vspace{-7pt}    
    \caption{Median of absolute prediction errors. Unit: eV.}
    \label{tab:property_pred_median}
    \resizebox{\columnwidth}{!}{
    \scalebox{1}{
        \begin{tabular}{l | ccccc}
\toprule
Method & $ \overline{E} $ & $E_\text{min}$ & $ \overline{\Delta\epsilon} $ & $\Delta \epsilon_\text{min}$  & $\Delta \epsilon_\text{max}$ \\
\midrule
RDKit & 0.8914 & 0.6629 & \bf 0.2947 & \bf 0.5196 & 0.1617 \\
\bf \method & \bf 0.5328 & \bf 0.1145 & 0.3207 & 0.7365 & \bf 0.1337 \\
\bottomrule
\end{tabular}

    }}
    \vspace{-4pt}    
\end{table}

\begin{table}[htbp]
    \centering
    \vspace{-7pt}    
    \caption{Ablation study on the performance of \method. The results of CGCF are taken directly from Table~\ref{tab:confgen}. The threshold $\delta$ is set as $0.5$\AA~for QM9 and $1.25$\AA~for Drugs following Table~\ref{tab:confgen}.}
    \label{tab:ablation}
    \resizebox{\columnwidth}{!}{
    \scalebox{1}{
        \begin{tabular}{c | l | cccc}
\toprule
\multirow{2}{*}{Dataset} 
& \multirow{2}{*}{Method} 
& \multicolumn{2}{c}{COV (\%)} 
& \multicolumn{2}{c}{MAT (\AA)} \\
& & Mean & Median & Mean & Median \\
\midrule
\multirow{3}{*}{QM9} 
& CGCF & 78.05 & 82.48 & 0.4219 & 0.3900 \\
& \methodd & 81.94 & 85.80 & 0.3861 & 0.3571 \\
& \method & \textbf{88.49} & \textbf{94.13} & \textbf{0.2673} & \textbf{0.2685} \\   
\midrule
\multirow{3}{*}{Drugs} 
& CGCF & 53.96 & 57.06 & 1.2487 & 1.2247 \\
& \methodd & 53.40 & 52.58 & 1.2493 & 1.2366 \\
& \method & \textbf{62.15} & \textbf{70.93} & \textbf{1.1629} & \textbf{1.1596} \\    
\bottomrule
\end{tabular}

    }}
    \vspace{-4pt}    
\end{table}

\subsection{Ablation Study}
\label{subsec:ablation}
So far, we have justified the effectiveness of the proposed \method on multiple tasks. 
However, it remains unclear whether the proposed strategy, i.e., directly estimating the gradient fields of log density of atomic coordinates, contributes to the superior performance of \method.
To gain insights into the working behaviours of \method, we conduct the ablation study in this section.

We adopt a variant of \method, called \methodd, which works in a way similar to distance-based methods~\cite{simm2020GraphDG, xu2021cgcf}.
In specific, \methodd bypasses chain rule and directly generates interatomic distances via annealed Langevin dynamics using estimated gradient fields of interatomic distances, i.e., $\mathbf{s}_\theta(\mathbf{d}, \sigma)$ (Section~\ref{subsec:scorenet}), after which we convert them to conformations by the same post-processing algorithm as used in~\citet{xu2021cgcf}. 
Following the setup of Section~\ref{subsec:conformation_generation}, we evaluate two approaches on the conformation generation task and summarize the results in Table~\ref{tab:ablation}. 
As presented in Table~\ref{tab:ablation}, we observe that \methodd performs significantly worse than \method on both datasets, and performs on par with the state-of-the-art method CGCF. The results indicate that
the two-stage generation procedure of distance-based methods does harm the quality of conformations, and our strategy effectively addresses this issue, leading to much more accurate and diverse conformations.

\section{Conclusion and Future Work}
\label{sec:conclusion}



In this paper, we propose a novel principle for molecular conformation generation called \method, where samples are generated using Langevin dynamics within one stage with physically inspired gradient fields of log density of atomic coordinates. 
We novelly develop an algorithm to effectively estimate these gradients and meanwhile preserve their roto-translation equivariance.
Comprehensive experiments over multiple tasks verify that \method outperforms previous state-of-the-art baselines by a significant margin.
Our future work will include extending our \method approach to 3D molecular design tasks and many-body particle systems.

\section*{Acknowledgments}

We would like to thank Yiqing Jin for refining the figures used in this paper.
This project is supported by the Natural Sciences and Engineering Research Council (NSERC) Discovery Grant, the Canada CIFAR AI Chair Program, collaboration grants between Microsoft Research and Mila, Samsung Electronics Co., Ldt., Amazon Faculty Research Award, Tencent AI Lab Rhino-Bird Gift Fund and a NRC Collaborative R\&D Project (AI4D-CORE-06). This project was also partially funded by IVADO Fundamental Research Project grant PRF-2019-3583139727.

\bibliography{reference}
\bibliographystyle{icml2021}

\clearpage
\newpage
\onecolumn
\appendix
{\Large \textbf{Supplementary Material}}
\section{Proof of Proposition~\ref{prop:roto}}
Let $\mathbf{R}= (\mathbf{r}_1, \mathbf{r}_2, \cdots, \mathbf{r}_{\vert V \vert}) \in \mathbb{R}^{\vert V \vert \times 3}$ denote a molecular conformation. 
Let $\gamma_{\mathbf{t}}: \mathbb{R}^{\vert V \vert \times 3} \rightarrow \mathbb{R}^{\vert V \vert \times 3}$ denote any 3D translation function where $\gamma_{\mathbf{t}}(\mathbf{R})_i \coloneqq \mathbf{r}_i + \mathbf{t}$, and let $\rho_D: \mathbb{R}^{\vert V \vert \times 3} \rightarrow \mathbb{R}^{\vert V \vert \times 3}$ denote any 3D rotation function whose rotation matrix representation is $D \in \mathbb{R}^{3 \times 3}$, i.e., $\rho_D(\mathbf{R})_i = D\mathbf{r_i} \in \mathbb{R}^3$.
In our context, we say a score function of atomic coordinates $\mathbf{s}: \mathbb{R}^{\vert V \vert \times 3} \rightarrow \mathbb{R}^{\vert V \vert \times 3}$ is roto-translation equivariant if it satisfies:

\begin{equation}
\label{eq:suppl_roto}
    \mathbf{s} \circ \gamma_{\mathbf{t}} \circ \rho_D (\mathbf{R}) = \rho_D \circ \mathbf{s}(\mathbf{R}),
\end{equation}
Intuitively, the above equation says that applying any translation $\gamma_{\mathbf{t}}$ to the input has no effect on the output, and applying any rotation $\rho_D$ to the input has the same effect as applying it to the output, i.e., the gradients rotate together with the molecule system and are invariant under translation.
\begin{proof}
Denote $\hat{\mathbf{R}} = \gamma_{\mathbf{t}} \circ \rho_D (\mathbf{R})$. According to the definition of translation and rotation function, we have $\hat{\mathbf{R}}_i = \hat{\mathbf{r}}_i = D\mathbf{r}_i + \mathbf{t}$.
According to Eq.~\ref{eq: chainrule}, we have
\begin{equation}
\label{eq: proof}
\begin{aligned}
    \forall i, \big (\mathbf{s} \circ \gamma_{\mathbf{t}} \circ \rho_D (\mathbf{R})\big )_i & = 
    \mathbf{s}(\hat{\mathbf{R}})_i \\
    &= \sum\limits_{j \in N(i)}
    \frac{1}{\hat{d}_{ij}} \cdot \mathbf{s}(\hat{\mathbf{d}})_{ij} \cdot (\hat{\mathbf{r}}_i - \hat{\mathbf{r}}_j) \\
    &= \sum\limits_{j \in N(i)}
    \frac{1}{{d}_{ij}} \cdot \mathbf{s}({\mathbf{d}})_{ij} \cdot \big((D\mathbf{r}_i + \mathbf{t}) - (D\mathbf{r}_j + \mathbf{t})\big)
    \\
    &= \sum\limits_{j \in N(i)}
    \frac{1}{{d}_{ij}} \cdot \mathbf{s}({\mathbf{d}})_{ij} \cdot D(\mathbf{r}_i - \mathbf{r}_j) \\
    &=D \Big (\sum\limits_{j \in N(i)}
    \frac{1}{{d}_{ij}} \cdot \mathbf{s}({\mathbf{d}})_{ij} \cdot (\mathbf{r}_i - \mathbf{r}_j) \Big )\\
    &= D \mathbf{s}({\mathbf{R}})_i \\
    &= \big(\rho_D \circ \mathbf{s}(\mathbf{R})\big)_i.
\end{aligned}
\end{equation}
Here $d_{ij} = \hat{d}_{ij}$ and $\mathbf{d} = \hat{\mathbf{d}}$ because rotation and translation will not change interatomic distances. 
Together with Eq.~\ref{eq:suppl_roto} and Eq.~\ref{eq: proof}, we conclude that the score network defined in Eq.~\ref{eq: chainrule} is roto-translation equivariant.
\end{proof}

\section{Additional Hyperparameters}
We summarize additional hyperparameters of our \method in Table~\ref{tab:suppl_hyperparameters}, including the biggest noise level $\sigma_1$, the smallest noise level $\sigma_L$, the number of noise levels $L$, the number of sampling steps per noise level $T$, the smallest step size $\epsilon$, the batch size and the training epochs.
\begin{table}[htbp]
    \centering
    \vspace{-7pt}    
    \caption{Additional hyperparameters of our \method.}
    \label{tab:suppl_hyperparameters}
        \begin{tabular}{cccccccc}
    \toprule
    Dataset & $\sigma_1$ & $\sigma_L$ & $L$ & $T$ & $\epsilon$ & Batch size & Training epochs\\ 
    \midrule
    GEOM-QM9 & 10 & 0.01 & 50 &  100 & 2.4e-6 & 128 & 200\\
    GEOM-Drugs & 10 & 0.01 & 50 &  100 & 2.4e-6 & 128 & 200\\
    ISO17 & 3 & 0.1 & 30 &  100 & 2.0e-4 & 128 & 100\\    
    \bottomrule
\end{tabular}
\end{table}

\clearpage
\section{Additional Experiments}

We present more results of the \textbf{Cov}erage (COV) score at different threshold $\delta$ for GEOM-QM9 and GEOM-Drugs datasets in Tables~\ref{tab:suppl_exp_qm9} and~\ref{tab:suppl_exp_drugs} respectively.
Results in Tables~\ref{tab:suppl_exp_qm9} and~\ref{tab:suppl_exp_drugs} indicate that the proposed \method consistently outperforms previous state-of-the-art baselines, including GraphDG and CGCF.
In addition, we also report the \textbf{Mis}match (MIS) score defined as follows:
\begin{equation}
\operatorname{MIS}(S_g, S_r) = \frac{1}{\vert S_g \vert} \Big\vert
\Big\{\mathbf{R}\in S_g \vert \operatorname{RMSD}(\mathbf{R}, \hat{\mathbf{R}}) > \delta,  \forall \hat{\mathbf{R}} \in S_r\Big\}
\Big\vert,
\end{equation}
where $\delta$ is the threshold. 
The MIS score measures the fraction of generated conformations that are not matched by any ground-truth conformation in the reference set given a threshold $\delta$. 
A lower MIS score indicates less bad samples and better generation quality.
As shown in Tables~\ref{tab:suppl_exp_qm9} and~\ref{tab:suppl_exp_drugs}, the MIS scores of \method are consistently lower than the competing baselines, which demonstrates that our \method generates less invalid conformations compared with the existing models.

\begin{table}[htbp]
    \centering
    \vspace{-7pt}    
    \caption{COV and MIS scores of different approaches on GEOM-QM9 dataset at different threshold $\delta$.}
    \label{tab:suppl_exp_qm9}
    \resizebox{\columnwidth}{!}{
    \scalebox{1}{
        \newcommand{\graycell}{\cellcolor{gray!25}}

\begin{tabular}{c | ccc | ccc | ccc | ccc}
\toprule
QM9 & \multicolumn{3}{c|}{Mean COV (\%)} & \multicolumn{3}{c|}{Median COV (\%)} & \multicolumn{3}{c|}{Mean MIS (\%)} & \multicolumn{3}{c}{Median MIS (\%)} \\
\midrule
$\delta$ (\AA) & GraphDG & CGCF & \textbf{\method} & GraphDG & CGCF & \textbf{\method} & GraphDG & CGCF & \textbf{\method} & GraphDG & CGCF & \textbf{\method} \\
\midrule
0.10 & 1.03 & 0.14 & \graycell 17.59 & 0.00 & 0.00 & \graycell 10.29 & 99.70 & 99.93 & \graycell 92.30 & 100.00 & 100.00 & \graycell 96.23 \\
0.20 & 13.05 & 10.97 & \graycell 43.60 & 2.99 & 3.95 & \graycell 37.92 & 96.12 & 96.91 & \graycell 81.67 & 99.04 & 99.04 & \graycell 85.65 \\
0.30 & 32.26 & 31.02 & \graycell 61.94 & 18.81 & 22.94 & \graycell 59.66 & 87.15 & 89.21 & \graycell 73.06 & 94.44 & 94.44 & \graycell 77.27 \\
0.40 & 53.53 & 53.65 & \graycell 75.45 & 50.00 & 52.63 & \graycell 80.64 & 72.60 & 78.35 & \graycell 65.38 & 82.63 & 82.63 & \graycell 70.00 \\
0.50 & 73.33 & 78.05 & \graycell 88.49 & 84.21 & 82.48 & \graycell 94.13 & 56.09 & 63.51 & \graycell 53.56 & 64.66 & 64.66 & \graycell 56.59 \\
\midrule
0.60 & 88.24 & 94.85 & \graycell 97.71 & 98.83 & 98.79 & \graycell 100.00 & 40.36 & 44.82 & \graycell 34.78 & 43.73 & 43.73 & \graycell 35.86 \\
0.70 & 95.93 & 99.05 & \graycell 99.52 & 100.00 & 100.00 & \graycell 100.00 & 27.93 & 29.64 & \graycell 21.00 & 23.38 & 23.38 & \graycell 15.64 \\
0.80 & 98.70 & 99.47 & \graycell 99.68 & 100.00 & 100.00 & \graycell 100.00 & 19.15 & 20.98 & \graycell 12.86 & 10.72 & 10.72 & \graycell 5.23 \\
0.90 & 99.33 & 99.50 & \graycell 99.77 & 100.00 & 100.00 & \graycell 100.00 & 12.76 & 16.74 & \graycell 8.98 & 3.65 & 3.65 & \graycell 1.53 \\
1.00 & 99.48 & 99.50 & \graycell 99.86 & 100.00 & 100.00 & \graycell 100.00 & 8.00 & 14.19 & \graycell 6.76 & 0.47 & 0.47 & \graycell 0.36 \\
\midrule
1.10 & 99.51 & 99.51 & \graycell 99.91 & 100.00 & 100.00 & \graycell 100.00 & 4.99 & 12.26 & \graycell 5.57 & 0.00 & 0.00 & \graycell 0.00 \\
1.20 & 99.51 & 99.51 &\graycell 99.94 & 100.00 & 100.00 & \graycell 100.00 & 2.95 & 9.68 & \graycell 3.48 & 0.00 & 0.00 & \graycell 0.00 \\
1.30 & 99.51 & 99.51 & \graycell 99.96 & 100.00 & 100.00 & \graycell 100.00 & 1.65 & 7.48 & \graycell 1.91 & 0.00 & 0.00 & \graycell 0.00 \\
1.40 & 99.51 & 99.51 & \graycell 99.96 & 100.00 & 100.00 & \graycell 100.00 & 0.84 & 5.94 & \graycell 1.05 & 0.00 & 0.00 & \graycell 0.00 \\
1.50 & 99.52 & 99.51 & \graycell 99.97 & 100.00 & 100.00 & \graycell 100.00 & 0.41 & 4.94 & \graycell 0.70 & 0.00 & 0.00 & \graycell 0.00 \\
\bottomrule

\end{tabular}

    }}
\end{table}

\begin{table}[htbp]
    \centering
    \vspace{-7pt}    
    \caption{COV and MIS scores of different approaches on GEOM-Drugs dataset at different threshold $\delta$.}
    \label{tab:suppl_exp_drugs}
    \resizebox{\columnwidth}{!}{
    \scalebox{1}{
        \newcommand{\graycell}{\cellcolor{gray!25}}

\begin{tabular}{c | ccc | ccc | ccc | ccc}
\toprule
Drugs & \multicolumn{3}{c|}{Mean COV (\%)} & \multicolumn{3}{c|}{Median COV (\%)} & \multicolumn{3}{c|}{Mean MIS  (\%)} & \multicolumn{3}{c}{Median MIS (\%)} \\
\midrule
$\delta$ (\AA) & GraphDG & CGCF & \textbf{\method} & GraphDG & CGCF & \textbf{\method} & GraphDG & CGCF & 
\textbf{\method} & GraphDG & CGCF & \textbf{\method} \\
\midrule

0.25 & 0.00 & 0.06 & \graycell 0.17 & 0.00 & 0.00 & \graycell 0.00 & 100.00 & 99.99 & \graycell 99.97 & 100.00 & 100.00 & \graycell 100.00 \\
0.50 & 0.26 & 0.80 & \graycell 1.15 & 0.00 & 0.00 & \graycell 0.00 & 99.95 & 99.80 & \graycell 99.52 & 100.00 & 100.00 & \graycell 100.00 \\
0.75 & 0.75 & 5.81 & \graycell 9.15 & 0.00 & 0.00 & \graycell 0.50 & 99.69 & 97.86 & \graycell 96.94 & 100.00 & 100.00 & \graycell 99.75 \\
1.00 & 2.39 & 24.67 & \graycell 30.60 & 0.00 & 11.81 & \graycell 18.89 & 99.14 & 90.82 & \graycell 89.63 & 100.00 & 96.50 & \graycell 95.58 \\
\midrule
1.25 & 8.27 & 53.96 & \graycell 62.15 & 0.00 & 57.06 & \graycell 70.93 & 97.92 & 78.32 & \graycell 76.58 & 100.00 & 86.28 & \graycell 84.48 \\
1.50 & 19.96 & 79.37 & \graycell 86.62 & 4.00 & 92.46 & \graycell 98.79 & 94.40 & 63.80 & \graycell 60.06 & 99.14 & 66.39 & \graycell 63.81 \\
1.75 & 36.86 & 91.47 & \graycell 96.53 & 26.58 & 100.00 & \graycell 100.00 & 87.68 & 49.72 & \graycell 43.63 & 95.83 & 47.09 & \graycell 41.72 \\
2.00 & 55.79 & 96.73 & \graycell 98.62 & 55.26 & 100.00 & \graycell 100.00 & 76.99 & 37.53 & \graycell 29.80 & 87.35 & 30.90 & \graycell 22.44 \\
\midrule
2.25 & 71.43 & 99.05 & \graycell 99.83 & 80.00 & 100.00 & \graycell 100.00 & 61.76 & 27.30 & \graycell 18.68 & 69.74 & 20.07 & \graycell 10.93 \\
2.50 & 83.53 & 99.47 & \graycell 100.00 & 95.45 & 100.00 & \graycell 100.00 & 44.32 & 18.97 & \graycell 11.09 & 42.96 & 12.33 & \graycell 3.31 \\
2.75 & 91.09 & 99.60 & \graycell 100.00 & 100.00 & 100.00 & \graycell 100.00 & 27.92 & 12.52 & \graycell 6.32 & 16.67 & 6.82 & \graycell 0.74 \\
3.00 & 95.00 & 99.96 & \graycell 100.00 & 100.00 & 100.00 & \graycell 100.00 & 15.97 & 7.67 & \graycell 3.36 & 2.46 & 3.32 & \graycell 0.00 \\

\bottomrule

\end{tabular}

    }}
\end{table}

\clearpage
\section{More Generated Samples}
We present more visualizations of generated conformations from our \method in Figure~\ref{fig:exp_suppl}, including models trained on GEOM-QM9 and GEOM-drugs datasets.
\begin{figure*}[htbp]
	\centering
    \includegraphics[width=\linewidth]{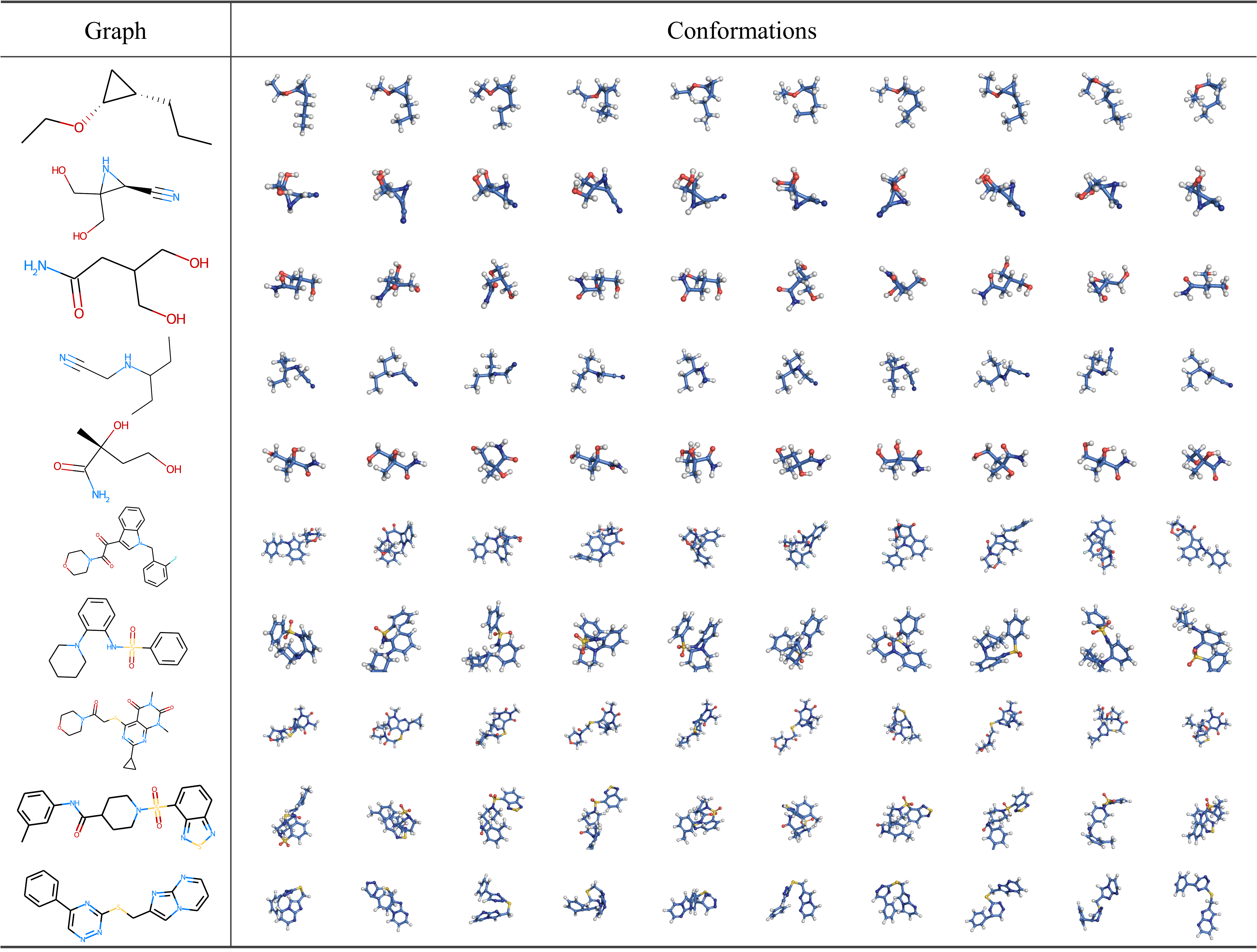}
    \caption{Visualization of conformations generated by our \method. For each molecular graph, we randomly sample 10 generated conformations from our model. For the top 5 rows, the molecular graphs are drawn from the GEOM-QM9 test dataset, while for the bottom 5 rows, the molecular graphs are drawn from the GEOM-drugs test dataset.}
    \label{fig:exp_suppl}
\end{figure*}

\end{document}